\pdfoutput=1
\documentclass[11pt]{article}
\usepackage[table]{xcolor}
\usepackage[]{ACL2023}
\usepackage{times}
\usepackage{latexsym}
\usepackage[T1]{fontenc}
\usepackage[utf8]{inputenc}

\usepackage{amsmath}
\usepackage{amssymb}
\usepackage{amsthm}
\usepackage{booktabs}
\usepackage[capitalize]{cleveref}
\usepackage{dsfont}
\usepackage{graphicx}
\usepackage{multirow}
\usepackage{floatrow}
\usepackage{array}
\newcolumntype{C}{@{\extracolsep{0.1cm}}c@{\extracolsep{0pt}}}%
\usepackage{wrapfig}
\usepackage{multicol}
\usepackage[english]{babel}
\usepackage{comment}
\usepackage{inconsolata}
\usepackage{caption}
\usepackage{subcaption}
\usepackage{algorithm}
\usepackage{algpseudocode}
\usepackage{algorithmicx}
\usepackage{todonotes}
\usepackage{enumitem}
\usepackage[table]{xcolor}
\usepackage{cuted}

\usepackage{amsmath,amsfonts,amssymb,amsthm,bm}
\usepackage{bbold}

\def\eqref#1{equation~\ref{#1}}

\def\1{\bm{1}}

\def\vx{{\bm{x}}}

\DeclareMathAlphabet{\mathsfit}{\encodingdefault}{\sfdefault}{m}{sl}
\SetMathAlphabet{\mathsfit}{bold}{\encodingdefault}{\sfdefault}{bx}{n}

\newtheorem{proposition}{Theorem}
\newtheorem{corollary}{Corollary}

\usepackage{tikz}
\usepackage{xspace}

\definecolor{cb-blue}       {RGB}{ 0, 109, 219}
\definecolor{cb-burgundy}   {RGB}{146,   0,   0}
\definecolor{cb-brown} {RGB}{150, 75,  0}
\definecolor{cb-green-lime} {RGB}{138, 226,  52}
\definecolor{cb-yellow}     {RGB}{253, 216, 53}

\newcommand{\GC}{\cellcolor{gray!15}}
\newcommand*\REDC{\tikz[baseline=(char.base)]{
        \node[draw=red, fill=red, text=white, shape=circle, minimum size=4mm, inner sep=0pt] (char)
        {\rule[-3pt]{0pt}{\dimexpr2ex+2pt}r};}\xspace}

\newcommand*\GREENC{\tikz[baseline=(char.base)]{
        \node[draw=cb-green-lime, fill=cb-green-lime, text=black, shape=circle, minimum size=4mm, inner sep=0pt] (char)
        {\rule[-3pt]{0pt}{\dimexpr2ex+2pt}g};}\xspace}
                
\newcommand*\BROWNC{\tikz[baseline=(char.base)]{
        \node[draw=cb-brown, fill=cb-brown, text=white, shape=circle, minimum size=4mm, inner sep=0pt] (char)
        {\rule[-3pt]{0pt}{\dimexpr2ex+2pt}b};}\xspace}

\theoremstyle{definition}
\newtheorem{definition}{Definition}
\newenvironment{example}[2]{
\vspace{3pt}
\noindent\textbf{Example #1}: \emph{#2}.
}
{
\vspace{3pt}
}

\theoremstyle{remark}

\newcommand{\inp}{\mathbf{x}}
\newcommand{\inpi}{x}
\newcommand{\inpspace}{\mathcal{X}}
\newcommand{\goodinpspace}{X}
\newcommand{\interp}{\mathcal{I}}
\newcommand{\vocab}{\Sigma}
\newcommand{\lexicon}{\mathcal{L}}
\newcommand{\typefn}{\tau}
\newcommand{\equivfn}{\epsilon}

\newcommand{\compfn}{\mathcal{C}}

\newcolumntype{P}[1]{>{\centering\arraybackslash}p{#1}}
\newcolumntype{M}[1]{>{\centering\arraybackslash}m{#1}}

\crefname{theorem}{theorem}{theorems}
\crefname{section}{Sec.}{Secs.}
\crefname{proposition}{Theorem}{Theorems}

\definecolor{cb-blue}       {RGB}{ 0, 109, 219}
\definecolor{cb-burgundy}   {RGB}{146,   0,   0}
\definecolor{cb-green-lime} {RGB}{138, 226,  52}
\definecolor{cb-yellow}     {RGB}{253, 216, 53}
\definecolor{gray}{RGB}{87, 87, 87}
\definecolor{red}{RGB}{173, 35, 35}
\definecolor{blue}{RGB}{42, 75, 215}
\definecolor{green}{RGB}{29, 105, 20}
\definecolor{brown}{RGB}{129, 74, 25}
\definecolor{purple}{RGB}{129, 38, 192}
\definecolor{cyan}{RGB}{41, 208, 208}
\definecolor{yellow}{RGB}{189, 167, 0}
\definecolor{Red}{rgb}{0.68, 0.05, 0.0}
\definecolor{Blue}{rgb}{0.0, 0.0, 0.61}
\definecolor{Blue1}{RGB}{214, 235, 245}
\definecolor{Blue2}{RGB}{235, 245, 250}
\definecolor{lime}{RGB}{60,179,113}

\usepackage{microtype}

\usepackage{pmboxdraw}

\newcommand\boxSpace{\hspace{1.5em}}
\newcommand\sboxSpace{\hspace{0.4em}}

\newcolumntype{L}{>{\hspace*{-\tabcolsep}}c}
\crefformat{equation}{Eq.~#2#1#3}

\newcommand{\stderr}[1]{\scriptsize $\pm #1$}
\newcommand\SCAN{\textsc{Scan}\xspace}
\newcommand\COGS{\textsc{Cogs}\xspace}
\newcommand\ALCH{\textsc{alchemy}\xspace}
\newcommand\CLEVR{\textsc{Clevr}\xspace}
\newcommand\CoGenT{\textsc{CoGenT}\xspace}
\newcommand\ourmethod{\textsc{LexSym}\xspace}

\title{LexSym: Compositionality as Lexical Symmetry}

\author{Ekin Aky\"urek ~~~~ Jacob Andreas \\
  Massachusetts Institute of Technology \\
  \texttt{\{akyurek,jda\}@mit.edu}
  }
\date{}

\begin{document}
\maketitle
\begin{abstract}
In tasks like semantic parsing, instruction following, and question answering, standard deep networks fail to generalize compositionally from small datasets. 
Many existing approaches overcome this limitation with model architectures that enforce a compositional process of sentence interpretation.
In this paper, we present a domain-general and model-agnostic formulation of compositionality as a constraint on \emph{symmetries of data distributions} rather than models. Informally, we prove that whenever a task can be solved by a compositional model, there is a corresponding data augmentation scheme---a procedure for transforming examples into other well-formed examples---that imparts compositional inductive bias on \emph{any} model trained to solve the same task. We describe a procedure called \ourmethod that discovers these transformations automatically, then applies them to training data for ordinary neural sequence models. Unlike existing compositional data augmentation procedures, \ourmethod can be deployed agnostically across text, structured data, and even images. It matches or surpasses state-of-the-art, task-specific models on \COGS semantic parsing, \SCAN and \ALCH instruction following, and \CLEVR-\CoGenT visual question answering datasets.
\end{abstract}

\section{Introduction}
\label{sec:introduction}

A central challenge in natural language processing is the design of models and learning algorithms that are simultaneously \emph{flexible} enough to capture the variability of human language and \emph{structured} enough to generalize in predictable and human-like ways. 
One important source of structure is the \textbf{principle of compositionality}, which (in one formulation) 
states that sentence meanings can be computed from a \emph{lexicon} of word meanings and a set of \emph{composition rules} governing how meanings combine \cite{montague1970universal}.
A long line of language processing research has operationalized the principle of compositionality as a \textbf{constraint on model architectures}, via independence assumptions or parameter tying schemes that ensure a compositional process of sentence interpretation \citep{lewis1968syntax,andreas2016neural}.
Compositional models enjoy sample-efficient learning and strong generalization in tasks from machine translation to question answering \cite{mccoy2020does}.

\begin{figure*}[t!]
    \centering
    \includegraphics[width=0.97\linewidth]{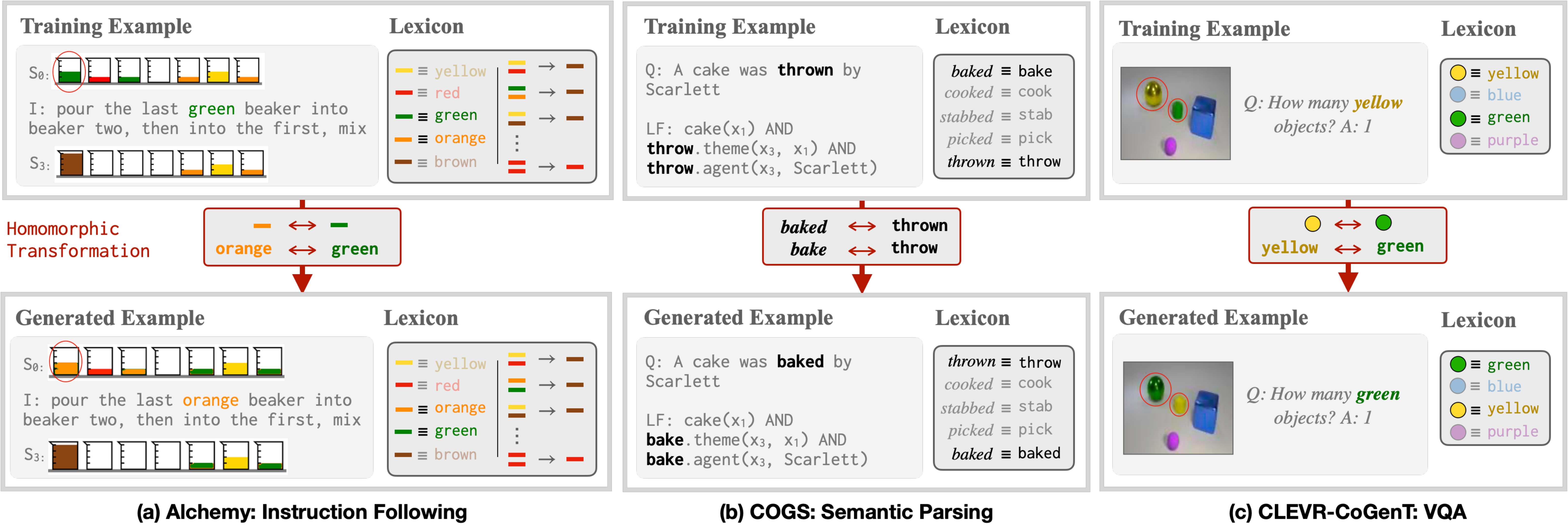}
    \vspace{-2em}
    \caption{We extract a lexicon that relates words to their meanings in each dataset. We then find \emph{homomorphic transformations} (\cref{sec:theory}) of this lexicon that, when applied to training examples, produce new, well-formed examples. (Note the changes in the generated examples)} %
    \label{fig:teaser}
    \strut
\end{figure*}

But much of human language is not (or at least not straightforwardly) compositional. Idioms, disfluencies, and context-sensitive meanings present major challenges to models in which all predictions must derive from a sequence of local composition operations. 
In recent years, more generic model architectures such as recurrent neural networks (RNNs) and transformers, with no explicit compositional scaffolding, 
have consistently outperformed compositional models in language processing tasks with natural data \cite{googlemt}. 
However, these models capture linguistic regularities only when trained on enormous amounts of data, and make surprising or problematic predictions when presented with novel word collocations or syntactic structures \cite{lake2018generalization}.

How can we train unstructured neural sequence models that generalize compositionally?
Recent work has introduced several \emph{compositional data augmentation} schemes: rule-based procedures or learned models that synthesize artificial training examples to promote generalization \cite[][\emph{inter alia}]{geca,shaw2020compositional,akyurek2020learning,zhang2022treemix}. While often effective, existing methods are specialized to specific data modalities or datasets. The conditions under which they succeed, and their relationships to the formal principle of compositionality, have remained unclear.

This paper presents a framework for understanding and improving such data-centric approaches to compositional modeling.
We first provide a mathematical characterization of the principle of compositionality as a 
\textbf{constraint on data distributions} rather than model architectures.
Intuitively, we show that whenever a language understanding task can be solved compositionally, that task's data distribution is guaranteed to exhibit specific \emph{symmetries}. These symmetries are functions that modify data points while preserving semantic acceptability.
\cref{fig:teaser}c gives an example of a symmetry in a visual question answering problem:
in any well-formed (image, question, answer) triple, swapping the words \emph{yellow} and \emph{green} and their associated pixel values yields a valid new triple. Such symmetries exist even in complex tasks like instruction following (\cref{fig:teaser}a), where they may depend not only on word-to-meaning mappings but relations \emph{between} meanings
(like the fact that red and green mix to produce brown).

Building on this formal link between compositionality and symmetry,
we introduce a procedure called \ourmethod that discovers symmetries automatically, then uses them to synthesize new training examples guaranteed to be correct and informative.
Crucially, \ourmethod does not require 
a complete compositional theory for a given problem domain---only a \emph{lexicon} of word meanings. These lexicons may themselves be automatically derived for most tasks. This makes \ourmethod very flexible: it requires little or no task-specific engineering, can be combined with any predictor, and
unlike other compositional data augmentation schemes does not require tree-structured or even sequential data.

Applied to ordinary neural sequence models,
\ourmethod outperforms state-of-the-art models on the
\CLEVR \CoGenT visual question answering benchmark \cite{johnson2017clevr} by a wide margin. 
\ourmethod is general, and matches or outperforms some specialized data augmentation schemes and models on the \COGS semantic parsing task \cite{kim2020cogs, kim2022uncontrolled}, and the \SCAN and \ALCH instruction following tasks \cite{lake2018generalization,long2016simpler}.

This paper thus offers two contributions: a theoretical contribution, in the form of a new lens on the principle of compositionality via symmetries of data distributions; and an empirical contribution, in the form of a data augmentation scheme that improves generalization on diverse language understanding tasks. The recent success of data augmentation approaches
highlight the fact that compositional inductive bias need not require compositional models. 
Our work formalizes and generalizes this ``data-centric'' account of compositionality.%
\footnote{Our implementations are released in \href{https://github.com/ekinakyurek/lexsym}{this https link}.}

\section{Background \& Approach}
\label{sec:background}

We begin with a discussion on the more general role of \emph{symmetry} in machine learning applications.

\begin{definition}\label{def:symmetry}
A \textbf{symmetry} of a set $X$ is a function $f$ satisfying:
\begin{equation}
\label{eq:symmetry}
    \{ f(\mathbf{x}) : \mathbf{x} \in X \} = X
\end{equation}
That is, applying $f$ to each element of $X$ leaves $X$ unchanged.
\end{definition}

A familiar example from computer vision is \emph{reflection symmetry}: in object recognition problems, image classes are generally invariant under reflection (a zebra seen in a mirror is still a zebra). The set of (image, class) pairs thus has as a symmetry the function $(\mathbf{x}, y) \mapsto (\texttt{reflect}(\mathbf{x}), y)$. In many domains, especially those (like computer vision and computational chemistry) that are constrained by physical laws, knowledge of the symmetries exhibited by a problem domain can dramatically reduce the difficulty of learning \cite{batzner20223, simeonov2022se}.

Past work has incorporated symmetry into machine learning problems in two ways. \textbf{Invariant and equivariant modeling} approaches structurally enforce symmetries via specialized architectures (improving generalization by decreasing the size of the hypothesis class; \citealp{cohen2016group}). \textbf{Data augmentation} approaches generate new training examples by applying known symmetries like reflections directly to training data (improving generalization by increasing dataset size; \citealp{shorten2019survey}). Data augmentation, the focus of this paper, is model-agnostic, and can be used in conjunction with pre-training 
while producing the same asymptotic effects as specialized model architectures \cite{chen2020group}.

The question this paper aims to answer is whether compositionality, like other domain-specific constraints, can be formalized in the language of symmetry.
We are not the first to consider this question: \citet{kiddon2015symmetry} define a theory of semantic equivalence in terms of symmetries of the set of natural language sentences, and \citet{Gordon2020Permutation} propose a model architecture for compositional semantic parsing via a symmetry that enforces \emph{permutation invariance} of lexicon entries. \ourmethod also derives symmetries from lexicons. It builds on past work by (1) characterizing the algebraic relationship between compositionality and symmetry, explaining the effectiveness of both \citet{Gordon2020Permutation}'s approach as well as other data augmentation schemes based on token and phrase substitution \citep{geca,wang2018switchout};  (2) discovering symmetries automatically, and (3) showing how to leverage them in a model- and modality-agnostic way. 
Additional related work is discussed in \cref{sec:otherrelated}.

\section{Compositionality as Lexical Symmetry}
\label{sec:theory}

Our main theoretical result, and the foundation of our modeling approach, can be stated as follows:
\emph{in any language understanding task that can be modeled compositionally, data for the task exhibits symmetries in the sense of \cref{def:symmetry}}.
We explain, formalize, and prove this statement below.

We consider tasks defined by a space of possible examples $\inpspace$, of which a subset of examples $\goodinpspace$ are \textbf{well-formed}.
We assume each example $\inp \in \inpspace$ is a discrete sequence $[\inpi_1, \ldots, \inpi_n]$, with $\inpi_i$ drawn from a vocabulary $\vocab$.
Finally, we assume that well-formedness can be computed by a a binary \textbf{interpretation function} $\interp : \inpspace \to \{0, 1\}$ with $\interp(\inp) = 1$ iff $\inp \in \goodinpspace$. A wide variety of language understanding problems, from very simple to very complex, may be defined in this way:

\begin{example}{1a}{Arithmetic Language Modeling}
Examples $\inp$ are true sentences of the form \emph{\underline{a} plus \underline{b} is \underline{c}}, where \emph{\underline{a}}, \emph{\underline{b}} and \emph{\underline{c}} are numbers: $\interp($\emph{one plus two is three}$) = 1$ but $\interp($\emph{two plus two is five}$) = 0$.
\end{example}

\newcommand{\inl}{\inp_\textrm{NL}}
\newcommand{\ilf}{\inp_\textrm{LF}}
\begin{example}{1b}{Semantic Parsing}
Examples $\inp$ are pairs $(\inl, \ilf)$, where $\inl$ is an sentence, $\ilf$ is a logical form, and $\interp(\inl, \ilf) = 1$ iff $\ilf$ represents a possible meaning of $\inl$ (\cref{fig:teaser}b).
\end{example}

\begin{example}{1c}{Visual Question Answering}
\newcommand{\iques}{\inp_\textrm{Q}}
\newcommand{\iim}{\inp_\textrm{I}}
\newcommand{\ians}{\inp_\textrm{A}}
Examples $\inp$ are triples $(\iques, \iim, \ians)$, where $\iques$ is a question, $\iim$ is a (rasterized) image, $\ians$ is an answer, and $\interp(\iques, \iim, \ians) = 1$ iff $\ians$ is the answer to $\iques$ in $\iim$ (\cref{fig:teaser}c).
\end{example}

\noindent
Notice that the vocabulary $\vocab$ contains not just natural language words, but other kinds of data: logical symbols (1b) or even image patches (1c).

``Language understanding'' in each of these tasks is encapsulated by the function $\interp$.
What does it mean for $\interp$ to be \emph{compositional}?
Under most definitions, a compositional  language understanding procedure should factorize into a lexicon, which captures meanings of words, and a composition procedure, which derives example-level interpretations from these meanings. 
We model word meanings in terms of \emph{relations} between items in $\vocab$.
In arithmetic, to know the meaning of the word \emph{five} is to know that it is a number, less than \emph{seven}, the successor of \emph{four}, etc. In semantic parsing, the meaning of the word \emph{cat} is encapsulated by the fact that it is of the same type as \emph{dog}, and translatable into the logical symbol \texttt{cat$'$}. We model this notion of word meaning by equipping $\vocab$ with extra structure describing these relations:

\begin{definition}\label{def:lex-alg}
A \textbf{lexical algebra} is a collection of relations $r_1, \ldots, r_n$ between vocabulary items, where each $r : \Sigma^p \to \{0, 1\}$. A lexical algebra can represent type information, like ``\emph{dog} is a noun'', as a unary relation; semantic correspondence, like ``\emph{sings} maps to \texttt{sing$'$}'', as a binary relation; and richer semantic knowledge, like ``\emph{three} is the sum of \emph{one} and \emph{two}'', with higher-order relations.
\end{definition}

\begin{figure}[t]
\centering
\includegraphics[width=0.95\columnwidth]{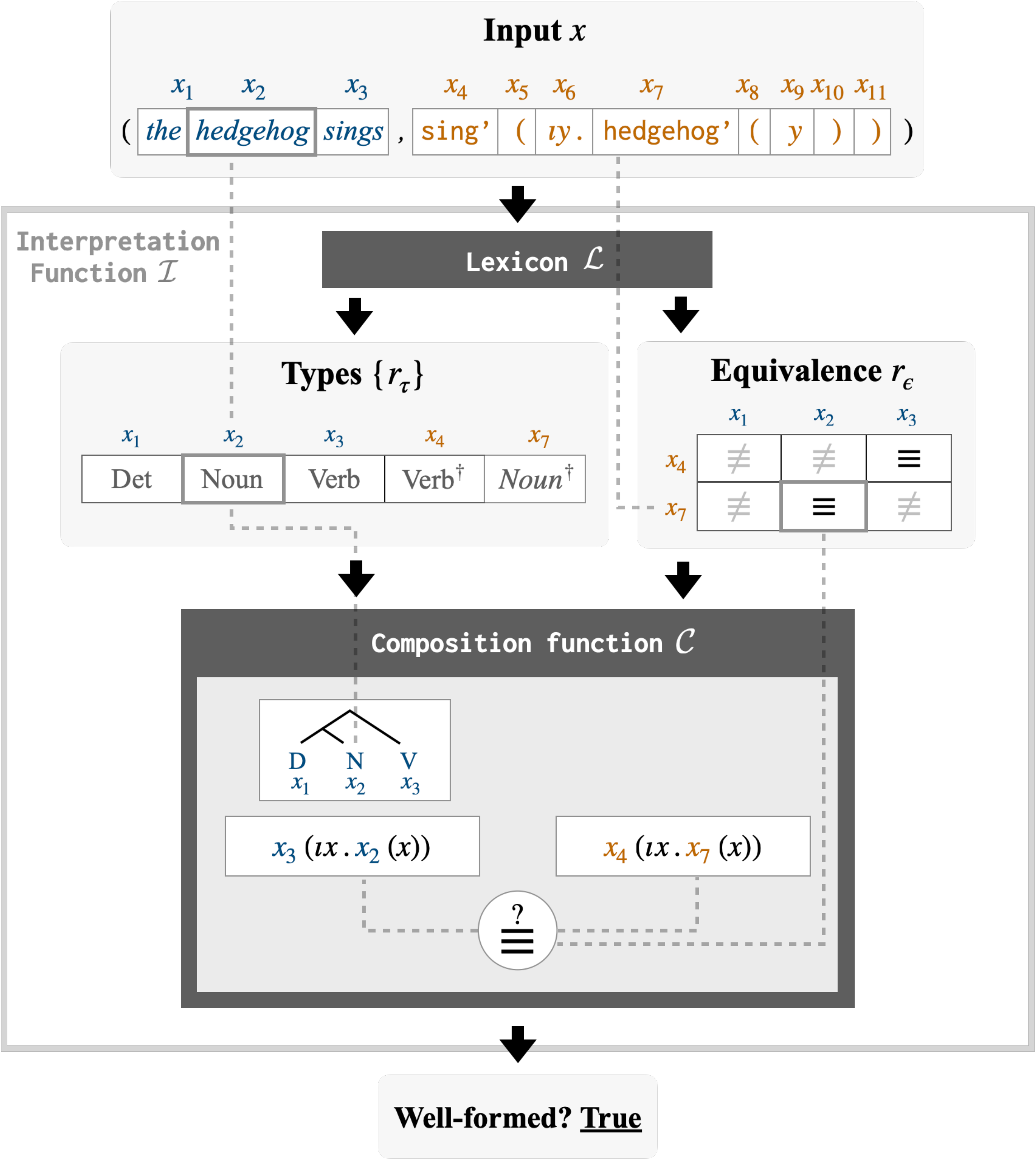}
\vspace{-.5em}
\caption{Idealized compositional semantic parser following \cref{def:abstraction}. A (sentence, logical form) pair is translated into a \emph{lexical representation} containing information about each word's type and meaning. We then determine whether the sentence evaluates to the logical form using \emph{only} the type and semantic correspondence matrices, using types to assign the sentence an abstract logical form, and correspondences to determine whether it matches the target.
\vspace{-1em}
}
\label{fig:semantic-parser}
\end{figure}

We may then represent individual examples in purely relational terms:

\begin{definition}\label{def:abstraction}
Denote the \textbf{lexical representation} $\lexicon(\inp) = (R_1(\inp), \ldots, R_n(\inp))$. $R(\inp)$ is an order-$p$ tensor whose $(i, \ldots, j)^{th}$ entry is equal to $r(\inpi_i, \ldots, \inpi_j)$. (If $r$ is a binary relation, $R(\inp)$ is an $|\inp| \times |\inp|$ matrix and $R(\inp)_{ij}$ specifies whether $r$ holds between $\inpi_i$ and $\inpi_j$.) See \cref{fig:semantic-parser} for examples.
\end{definition}

Finally, we use this relational representation to define compositionality of interpretation functions:

\begin{definition}\label{def:compositional}
$\goodinpspace$ is \textbf{$\lexicon$-compositional} if $\interp(\inp) = \compfn(\lexicon(\inp))$ for some \textbf{composition procedure} $\compfn$. In other words, $\goodinpspace$ is compositional if it compute the well-formedness of $\inp$ from word-level meanings and a generic composition procedure.\footnote{Every $\interp$ is trivially $\lexicon$-compositional with respect to an $\lexicon$ that assigns every vocabulary item to a unique unary relation. 
}
\end{definition}

This definition makes no assumptions about $\compfn$ beyond the fact that it can be defined purely in terms of $\lexicon(\inp)$. It can be applied to many tasks:

\begin{example}{2a}{Arithmetic Language Modeling}
Define $r_1$ to be the ternary relation $(a, b, c) \mapsto \mathds{1}_{[a + b = c]}$. Then $\compfn$ takes an example and checks whether the index corresponding to its three number words is true in $R_1$.
\end{example}

\begin{example}{2b}{Semantic Parsing}
A sketch of a semantic parser factorizable into a lexicon and an abstract composition function is depicted in \cref{fig:semantic-parser}.
As a real-world example, in the factored CCG semantic parser of \citet{kwiatkowski2011lexical}, words are assigned types and logical forms via a lexicon. These logical fragments are then composed by a parsing algorithm that depends only their types.
\end{example}

\begin{example}{2c}{Natural Language Inference}
\citet{maccartney2014natural}'s Natural Logic framework provides a procedure for determining entailment relations between sentences via a set of sentence rewriting operations that use only word-level information about entailment relations.
\end{example}

Under \cref{def:compositional}, a sentence interpretation procedure is compositional if the meaning of a sentence can be derived in a generic way ($\compfn$) from the meanings of its lexical items ($\mathcal{L}$).\footnote{As shown in Example 2b, it can be used to implement a language-to-logical form mapping, and thus generalizes the Montagovian definition of compositionality as a homomorphism from sentences to meanings \citep{montague1970english}.}
We remark, finally, that the parsing procedure depicted in \cref{fig:semantic-parser} is an idealization used to \emph{motivate} our approach; our experiments use more flexible models.

We are now ready to describe how, for compositional $\interp$, structure in $\lexicon$ translates into structure in the set of well-formed examples $\goodinpspace$. 

\begin{definition}\label{def:lhom}
A function $f$ is a \textbf{homomorphism of ($\pmb{\vocab}, \pmb{\lexicon}$}) (an ``$\lexicon$-homomorphism'') if:
\begin{align}\label{eq:lhom}
    &\forall r \in \lexicon, ~ \forall \inpi_1\ldots\inpi_p \in \vocab: \nonumber \\
    &\quad r(\inpi_1, \ldots, \inpi_p) = r(f(\inpi_1), \ldots, f(\inpi_p))
\end{align}
$f$ ``preserves the structure'' of $\lexicon$, ensuring that pairwise relationships are preserved among symbols. 
\cref{fig:teaser} shows examples: in (c), for instance, the words \emph{yellow} and \emph{green} and the corresponding colors must be \emph{swapped} to satisfy \cref{eq:lhom}. 
\end{definition}

Finally, we may state our main result:

\begin{proposition}\label{prop:lhomsymm}
    If $\goodinpspace$ is $\lexicon$-compositional, $f$ is an $\lexicon$-homomorphism, and $\inp \in \goodinpspace$, then $f(\inp) = [f(\inpi_1), \ldots, f(\inpi_n)] \in \goodinpspace$. Thus every homomorphism of $\lexicon$ well-formed examples $\in \goodinpspace$.
\end{proposition}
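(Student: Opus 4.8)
The plan is to prove the inclusion $f(\inp)\in\goodinpspace$ directly, by showing $\interp(f(\inp))=1$; since $\goodinpspace$ is exactly the preimage $\interp^{-1}(1)$, this is all that is required. The entire argument hinges on one intermediate equality — that $f$ leaves the lexical representation invariant, $\lexicon(f(\inp))=\lexicon(\inp)$ — after which $\lexicon$-compositionality finishes the proof automatically, because $\interp$ factors through $\lexicon$.

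First I would unpack the hypotheses. Since $\inp\in\goodinpspace$ we have $\interp(\inp)=1$, and by $\lexicon$-compositionality (\cref{def:compositional}) there is a composition procedure $\compfn$ with $\interp=\compfn\circ\lexicon$, so $\compfn(\lexicon(\inp))=1$. The same factorization applies to the image: to conclude $f(\inp)\in\goodinpspace$ it suffices to establish $\compfn(\lexicon(f(\inp)))=1$. Hence the goal reduces entirely to comparing the tensors $\lexicon(f(\inp))$ and $\lexicon(\inp)$.

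The core step is an entrywise comparison of these two lexical representations. Fix any relation $r\in\lexicon$ of order $p$ and any index tuple $(i,\ldots,j)$. Writing $f(\inp)=[f(\inpi_1),\ldots,f(\inpi_n)]$ as in the statement, the definition of the lexical tensor (\cref{def:abstraction}) gives $R(f(\inp))_{i\ldots j}=r(f(\inpi_i),\ldots,f(\inpi_j))$; that is, forming the tensors commutes with applying $f$ coordinatewise. Now I would invoke the $\lexicon$-homomorphism property (\cref{eq:lhom}) at the specific symbols $\inpi_i,\ldots,\inpi_j\in\vocab$, which states precisely $r(f(\inpi_i),\ldots,f(\inpi_j))=r(\inpi_i,\ldots,\inpi_j)=R(\inp)_{i\ldots j}$. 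Because this holds for every $r\in\lexicon$ and every index combination, $R(f(\inp))=R(\inp)$ for each relation, so $\lexicon(f(\inp))=\lexicon(\inp)$. Substituting into the factorization yields $\interp(f(\inp))=\compfn(\lexicon(f(\inp)))=\compfn(\lexicon(\inp))=\interp(\inp)=1$, giving $f(\inp)\in\goodinpspace$.

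I expect no genuine obstacle: the result is a clean substitution once the definitions are aligned, and the only care needed is bookkeeping — ensuring the homomorphism condition is applied for \emph{every} relation and at \emph{every} index tuple, and noting that $\lexicon(\inp)$ depends on $\inp$ only through its ordered symbols, so coordinatewise application of $f$ commutes with building the tensors. One caveat worth flagging: if one wanted the full symmetry claim of \cref{def:symmetry} (equality of images, not mere inclusion), surjectivity of $f$ restricted to $\goodinpspace$ would also be required; the proposition as stated asks only for $f(\goodinpspace)\subseteq\goodinpspace$, which the argument above delivers.
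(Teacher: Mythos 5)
Your proof is correct and follows essentially the same route as the paper's: establish $\lexicon(f(\inp))=\lexicon(\inp)$ entrywise via \cref{def:abstraction} and the homomorphism condition \cref{eq:lhom}, then push this equality through the factorization $\interp=\compfn\circ\lexicon$ to get $\interp(f(\inp))=\interp(\inp)=1$. The paper merely compresses your entrywise argument into a single line, and your closing caveat about needing invertibility for the full symmetry claim is exactly what the paper handles separately in \cref{prop:symmetry}.
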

\begin{proof}
From \cref{def:abstraction} and \ref{def:lhom}, $R_i(f(\inp)) = R_i(\inp) \,\, \forall i$. Then,
\begin{align*}
    \mathds{1}_{[f(\inp) \in \goodinpspace]} &= \interp(f(\inp)) \\
    &= \compfn(\lexicon(f(\inp))) \\
    &= \compfn(R_1(f(\inp)), \ldots, R_n(f(\inp))) \\
    &= \compfn(R_1(\inp), \ldots, R_n(\inp)) \\
    &= \interp(\inp) = \mathds{1}_{[\inp \in \goodinpspace]} \qedhere
\end{align*}
\end{proof}

\begin{corollary}\label{prop:symmetry}
With the additional constraint that $f$ is an $\lexicon$-\emph{isomorphism} (i.e., has an inverse), then $f$ is a symmetry of $\goodinpspace$ in the sense of \cref{eq:symmetry}.
\end{corollary}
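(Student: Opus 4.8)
The plan is to upgrade the one-directional containment already established in \cref{prop:lhomsymm} into the two-sided set equality demanded by \cref{eq:symmetry}. Writing $f$ also for the elementwise extension $\inp \mapsto [f(\inpi_1), \ldots, f(\inpi_n)]$, \cref{prop:lhomsymm} tells us immediately that $\{f(\inp) : \inp \in \goodinpspace\} \subseteq \goodinpspace$, since every $\lexicon$-homomorphism sends well-formed examples to well-formed examples. So the entire content of the corollary is the reverse inclusion $\goodinpspace \subseteq \{f(\inp) : \inp \in \goodinpspace\}$, i.e.\ surjectivity of $f$ onto $\goodinpspace$, and this is exactly where the extra isomorphism hypothesis must be used.

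First I would observe that the inverse $f^{-1}$ (which exists by assumption as a bijection of $\vocab$) is itself an $\lexicon$-homomorphism. This is a one-line substitution into \cref{eq:lhom}: given arbitrary $y_1, \ldots, y_p \in \vocab$, set $\inpi_i = f^{-1}(y_i)$; the homomorphism identity $r(\inpi_1, \ldots, \inpi_p) = r(f(\inpi_1), \ldots, f(\inpi_p))$ then reads $r(f^{-1}(y_1), \ldots, f^{-1}(y_p)) = r(y_1, \ldots, y_p)$, which is precisely the homomorphism condition for $f^{-1}$. Because $f$ is a bijection on $\vocab$, the $\inpi_i$ range over all of $\vocab$ as the $y_i$ do, so this holds for every tuple and every $r \in \lexicon$.

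With $f^{-1}$ in hand as an $\lexicon$-homomorphism, I would apply \cref{prop:lhomsymm} a second time, now to $f^{-1}$, to conclude $\{f^{-1}(\inp) : \inp \in \goodinpspace\} \subseteq \goodinpspace$. Then for any $\inp \in \goodinpspace$, the preimage $\inp' = f^{-1}(\inp)$ lies in $\goodinpspace$ and satisfies $f(\inp') = \inp$, since the elementwise lifts of $f$ and $f^{-1}$ are mutually inverse on $\inpspace$ whenever $f$ and $f^{-1}$ are mutually inverse on $\vocab$. Hence $\inp \in \{f(\inp') : \inp' \in \goodinpspace\}$, which is the reverse inclusion. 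Combining the two inclusions yields $\{f(\inp) : \inp \in \goodinpspace\} = \goodinpspace$, the defining equation \cref{eq:symmetry} of a symmetry.

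The routine parts are the two invocations of \cref{prop:lhomsymm} and the bookkeeping that the elementwise lift of a bijection on $\vocab$ is a bijection on sequences. The only step requiring any care---and the one I would flag as the crux---is the claim that $f^{-1}$ inherits the homomorphism property; everything else is formal. It is worth emphasizing that this is precisely where invertibility is essential: a non-invertible $\lexicon$-homomorphism can collapse $\goodinpspace$ into a proper subset of itself, giving containment but not equality, so the isomorphism hypothesis is not merely convenient but genuinely needed for the surjectivity half of the argument.
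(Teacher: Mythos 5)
Your proof is correct and takes essentially the same route as the paper's: the paper likewise reduces \cref{eq:symmetry} to showing that the preimage of every $\inp \in \goodinpspace$ lies in $\goodinpspace$, by rerunning the argument of \cref{prop:lhomsymm} with $f^{-1}$ in place of $f$. The only difference is that you spell out the verification that $f^{-1}$ is itself an $\lexicon$-homomorphism (and that the elementwise lifts are mutually inverse), steps the paper leaves implicit.
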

Here it suffices to show that the preimage of every $\inp \in \goodinpspace$ is also in $\goodinpspace$; the proof is the same as \cref{prop:lhomsymm} with $f^{-1}$ in place of $f$.

Despite their simplicity, \cref{prop:lhomsymm} and its corollary have an important consequence: if we can identify candidate entries in $\lexicon$, \emph{even if  ~$\compfn$ is unknown}, we can construct new examples $\inp \in \goodinpspace$ that respect, and provide evidence for, the compositional structure of $\goodinpspace$. 
There is an intriguing (if inexact) structural similarity between \cref{prop:symmetry} and Noether's theorem \citep{Noether1918}, which establishes an equivalence between symmetries of physical systems and their conserved quantities. Here, such symmetries imply constraints not on conservation laws but interpretation functions.

\section{\ourmethod: Data Augmentation with $\pmb{\lexicon}$-homomorphisms}
\label{sec:method}

Given a lexicon describing symbols and their relations, we have shown
how to turn homomorphisms of the lexicon into 
transformations of a dataset. 
Each such function $f$ that takes an example $\inp$ as input, replaces each token $\inpi_i \in \inp$ with a new one, and returns a well-formed example $\inp'$ as output. Every $\lexicon$-homomorphism may thus be viewed as a recipe for \emph{synthesizing training examples} from a small initial training set \cite{japkowicz2000learning}. However, to make this a practical modeling tool, we need some way of constucting $\lexicon$-homomorphisms for a task of interest. Below, we describe how to do so automatically: first, starting with only a task-specific lexicon $\mathcal{L}$ (\cref{ssec:from-lex}); next, starting with only a dataset and no initial lexicon (\cref{ssec:from-data}). 
We term the resulting approach
\textbf{\ourmethod}.

\subsection{Deriving Homomorphisms from Lexicons}
\label{ssec:from-lex}

Even in complex sequence modeling problems, useful lexicons are often simple enough that they can be specified by hand
\citep{jones2012semantic,Gordon2020Permutation}. Given a pre-specified algebraic $\lexicon$, there is a straightforward procedure for generating the associated symmetries by enumerating all functions $\vocab \to \vocab$ and testing which ones satisfy \cref{eq:lhom}. (See \cref{alg:lhom} in \cref{app:alg}.)
This algorithm is inefficient, but simple and practical for small $|\lexicon|$.

\subsection{Deriving Lexicons from Datasets}
\label{ssec:from-data}

For some tasks, it may be difficult to manually specify an algebraic lexicon. We next describe how to infer one automatically.
We focus on an important and extremely common class of language understanding problems with special structure. In \emph{semantic parsing} and \emph{instruction following}, examples $\inp$ consist of (input, output) pairs in which
inputs are sentences, outputs are meaning representations, and word meaning is characterized by a lexicon with two components. First, a set of unary \textbf{type predicates} $\{r_\typefn\}$ that assign words to types (like \textsc{entity} in semantic parsing).
Second, a \textbf{semantic correspondence relation} $r_\equivfn$ that specifies which actions or logical symbols can be derived from words (like \emph{sings} $\to$ \texttt{sing}$'$). With $n$ types, the lexicon required for these problems is $\lexicon = (r_{\typefn_1}, \ldots, r_{\typefn_n},  r_{\equivfn})$, which we abbreviate $(\{r_{\typefn_k}\},  r_{\equivfn})$ below.
We now show how to improve upon the procedure in \cref{ssec:from-lex}
by deriving $\lexicon$ from data
and sampling $\lexicon$-homomorphisms in constant time.

\paragraph{Learning $\pmb{\lexicon}$}

We build on past work noting that dictionaries of semantic correspondences can be constructed using alignment algorithms \citep{brown1993mathematics}.
Given an input $\inp$ consisting of a pair $(\inp_\text{text}, \inp_\text{meaning})$,
we use existing algorithms to align tokens in individual training examples. Finally, we identify the most frequently occurring alignments and add these to the semantic correspondence relation.
We may similarly use existing procedures to infer types by deriving them from part-of-speech tags or distributional patterns. %
See \cref{app:lexlearn} for details of the alignment and type inference algorithms used in our experiments.
These algorithms produce lexicons with three properties that are useful for the sampling scheme we describe next: types are \emph{disjoint}, and semantic correspondences are \emph{one-to-many} and \emph{type-preserving} (if two words are of the same type, so are their translations).

\paragraph{Sampling $\pmb{\lexicon}$-homomorphisms}
Once we have identified types and semantic correspondences, sampling $\lexicon$-homomorphisms is straightforward:

\begin{proposition}\label{prop:fswap}
Let $\inpi_i$ and $\inpi_j \in \vocab$ have the same type $r_\typefn(x_i) = r_\typefn(x_j) = 1$. For convenience, let $E_i=\{\inpi : r_\equivfn(\inpi_i, \inpi) = 1\}$ denote possible translations of $x_i$. The $f$ is an $\lexicon$-homomorphism:
\begin{equation}
\label{eq:fswap}
    f(\inpi) = \begin{cases}
        \inpi_j & \textrm{if } \inpi = \inpi_i \\
        \inpi_i & \textrm{if } \inpi = \inpi_j \\
        x' \in E_j & \textrm{if } \inpi \in E_i\\
        x' \in E_i & \textrm{if } \inpi \in E_j\\
        \inpi & \textrm{otherwise}
    \end{cases}
\end{equation}
\label{prop:homex}
\end{proposition}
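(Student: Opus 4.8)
The plan is to verify \cref{eq:lhom} directly for each of the two kinds of relation in $\lexicon = (\{r_{\typefn_k}\}, r_\equivfn)$ --- the unary type predicates and the binary semantic correspondence relation --- by case analysis on which of the five branches of $f$ each argument falls into. Throughout I will lean on the three structural properties guaranteed by the lexicon-learning procedure: types are \emph{disjoint} (each symbol has exactly one type); correspondences are \emph{type-preserving}, so every element of $E_i \cup E_j$ shares one common type $t$ (because $\inpi_i$ and $\inpi_j$ do); and correspondences are \emph{one-to-many} from words to logical symbols, so distinct source words have disjoint translation sets and the source symbols and target symbols are themselves disjoint. I would first note that $f$ is well-defined once we fix arbitrary maps $E_i \to E_j$ and $E_j \to E_i$; type-preservation guarantees these connect symbols of the common type $t$, which is all the argument requires.

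For the type predicates I would show $r_{\typefn_k}(\inpi) = r_{\typefn_k}(f(\inpi))$ for every $k$ and every $\inpi$. Only the first four branches move $\inpi$. When $\inpi = \inpi_i$ (resp.\ $\inpi_j$), $f$ sends it to $\inpi_j$ (resp.\ $\inpi_i$), and these agree on every type predicate since they were assumed to share a type and types are disjoint. When $\inpi \in E_i$ we have $f(\inpi) \in E_j$, and both carry the common type $t$ by type-preservation; the $E_j$ case is symmetric. The ``otherwise'' branch fixes $\inpi$ and is trivial, so all unary relations are handled.

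The crux is the binary relation $r_\equivfn$, where I must establish $r_\equivfn(\inpi, \inpi') = r_\equivfn(f(\inpi), f(\inpi'))$ for all pairs. Since an edge of $r_\equivfn$ runs from a source symbol to one of its translations, $r_\equivfn(\inpi, \inpi') = 1$ holds exactly when $\inpi'$ lies in the translation set of the source $\inpi$. The ``$=1$'' direction is easy: if $\inpi = \inpi_i$ and $\inpi' \in E_i$ then $f(\inpi) = \inpi_j$ and $f(\inpi') \in E_j$, so the edge survives, and the symmetric and ``otherwise'' source cases follow because $f$ carries translation sets into translation sets. The genuinely delicate part, which I expect to be the main obstacle, is the ``$=0$'' (non-edge) direction: I must rule out that $f$ \emph{creates} a spurious correspondence. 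This is precisely where disjointness does the work --- e.g.\ if $\inpi = \inpi_i$ but $\inpi' \notin E_i$, I need $f(\inpi') \notin E_j$; but the image $E_j$ can only be reached from $E_i$, whereas $f(E_j) \subseteq E_i$ and $E_i \cap E_j = \emptyset$, while $\inpi_i, \inpi_j$ are source symbols and so not in the target set $E_j$. Every remaining pair reduces to one of these patterns: a target-symbol first argument forces both sides to $0$ (targets have no translations), and an untouched source $\inpi \neq \inpi_i, \inpi_j$ keeps its translation set fixed, that set being disjoint from $\{\inpi_i, \inpi_j\} \cup E_i \cup E_j$. Collecting the type and correspondence cases then establishes \cref{eq:lhom}, so $f$ is an $\lexicon$-homomorphism.
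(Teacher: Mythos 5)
Your proof is correct and takes essentially the same route as the paper's: a direct case-by-case verification of \cref{eq:lhom} for the type predicates and for $r_\equivfn$, relying on exactly the same three structural assumptions (disjoint types, one-to-many correspondences from text to meaning, and type preservation). If anything, your write-up is slightly more thorough than the paper's enumeration, since you also note that $f$ is well-defined only after fixing maps $E_i \to E_j$ and $E_j \to E_i$, and you explicitly handle the pairs whose \emph{first} argument is untouched by $f$ while the second is swapped, cases the paper's $(x_1 \in \{\inpi_i\} \cup E_i)$-indexed case list leaves implicit.
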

\noindent
Proof is given in \cref{app:fswap}.
\cref{prop:fswap} yields an intuitive data augmentation procedure: select two (input, output) pairs of the same type, and \emph{swap} them and any of their meanings wherever they occur.
\cref{fig:teaser}b shows an example.
\cref{eq:fswap} is related to data augmentation schemes described by \citet{geca} and  \citet{liu2021counterfactual}, which synchronously \emph{substitute} words or phrases (equivalent to removing cases 2 and 4). Unlike \ourmethod, these methods cannot guarantee correctness:
in \cref{fig:teaser}c, substituting \emph{green} in place of \emph{yellow} yields an image with two green objects and an incorrect answer.

\section{Experiments}
\begin{table*}[t]
\centering
\resizebox{0.95\textwidth}{!}{%
\begin{tabular}{llllll}
\toprule
\bf Model & \bf \ALCH  & \bf \SCAN (\emph{jump}) & \bf \SCAN (\emph{around right}) & \bf \COGS & \bf \COGS (nonce) \\	
\midrule

\multicolumn{1}{l}{\textbf{Previous Work on COGS \& SCAN}}   \\
\boxSpace  GECA \cite{geca} & --& 99.94  \stderr{0.10}& 98.50  \stderr{1.90}& 47.74  \stderr{4.52}& --\\
\boxSpace LeAR \cite{liu2021learning} & --& --& --& 97.70  \stderr{0.70}& --\\
\boxSpace LexLSTM \cite{akyurek2021lexicon} & 36.80  \stderr{1.96}& 99.14  \stderr{1.55}& 88.41  \stderr{7.35}& 82.17  \stderr{0.72}&  81.40 \stderr{0.40} \\

\midrule
\multicolumn{1}{l}{\textbf{No Pre-training}} &  \\

\boxSpace LSTM 	& 41.72  \stderr{1.15}& \textcolor{white}{00}0.41  \stderr{0.34}& \textcolor{white}{0}8.65  \stderr{4.52}& 61.13  \stderr{4.12}& 61.13 \stderr{ 4.12}\\
\boxSpace \boxSpace+ Substitute \citep[e.g.][]{liu2021counterfactual} 	& 40.52  \stderr{0.84}& \textcolor{white}{0}99.95  \stderr{0.10}& 99.17  \stderr{0.93}& 81.99  \stderr{0.50}&  77.62  \stderr{0.78}\\
\boxSpace \boxSpace\GC + \ourmethod & \GC 45.85  \stderr{2.00}& \GC 100.00  \stderr{0}& \GC 99.51  \stderr{0.48}&\GC 81.86  \stderr{0.90}&\GC 77.25  \stderr{0.34} \\
\midrule
\multicolumn{1}{l}{\textbf{Language Pre-training}}\\
\boxSpace T5 	& 84.95  \stderr{0.44}& 93.60  \stderr{0}& 38.40  \stderr{0.90}& 83.30  \stderr{0.10}& 64.20  \stderr{2.00}\\
\boxSpace  \boxSpace   +CSL-Aug* \cite{qiu2021improving} & --& 99.70  \stderr{0}& --&  99.50  \stderr{0}& --\\
\boxSpace  \boxSpace  \GC +\ourmethod  & \GC 85.48  \stderr{0.16}&  \GC 99.96 \stderr{0.03}&  \GC 97.29 \stderr{2.16} & \GC 83.62  \stderr{0.27}& \GC 76.74  \stderr{2.23} \\
\bottomrule
\end{tabular}
}
\caption{Results on semantic parsing and instruction following. We provide mean and standard deviations over 5 random seeds. \ourmethod improves significantly over baselines, with and without large-scale pretraining. \\
{\footnotesize *Uses a customized formal representation.}
}
\label{tab:generalization_comparison}
\end{table*}

\begin{table}[t]
\caption{Exact match accuries on the \CLEVR and \CLEVR-\CoGenT validation sets. Results are averaged over 4 seeds. 
We obtain state-of-the-art results after applying \ourmethod to a (non-pretrained) sequence model. \ourmethod also yields higher accuracies than synchronous token \emph{substitution}. (A detailed breakdown by question category is presented in \cref{tab:clevr_cogent_detailed}). %
}
\label{tab:clevr_cogent}
\centering
\resizebox{0.95\linewidth}{!}{%
\begin{tabular}{lllllllll}
\toprule
  & \multicolumn{1}{c}{\bf \CoGenT} & \bf \CLEVR  \\ 
 \midrule
  \multicolumn{1}{l}{\textbf{Visual Pre-training}}  \\

\boxSpace Human \cite{johnson2017clevr} &--& 92.6\\ 
\boxSpace Film \cite{perez2018film} & 78.8 & 97.7 \\ 
\boxSpace S-MAC \cite{marois2018transfer} & 78.7 & 98.9\\ 
\boxSpace NSVQA \cite{yi2018neural} & 63.9 &  99.7 \\
\midrule 

\multicolumn{1}{l}{\bf Seq2Seq Baselines}  \\
T5 & 79.7 & -- \\
LexLSTM & 62.1 & -- \\
 \midrule 
 
\multicolumn{1}{l}{\bf No Pre-Praining}  \\

\boxSpace VQATransformer   & 73.3 \stderr{1.0}   & 93.6 \stderr{0.5}  \\
\boxSpace + Substitute \cite[e.g.][]{liu2021counterfactual} & 84.4 \stderr{0.7}     & 90.8 \stderr{0.3} \\
\boxSpace\GC + LexSym  &\GC 85.9 \stderr{0.9} & \GC 92.0 \stderr{0.9}  \\
 \bottomrule
\end{tabular}
}
\end{table}

Our experiments aim to evaluate whether \ourmethod can
improve compositional generalization in downstream models.
The main goal of these experiments is to evaluate
\emph{generality} across tasks and data modalities. Evaluation focuses on three diverse classes of language understanding problems: complex, context-dependent computations (\cref{subsec:complex}), large, automatically
derived lexicons (\cref{subsec:learnedeq}), and multi-modal data (\cref{subsec:multimodal}).

\subsection{Complex computations}\label{subsec:complex}
We first test \ourmethod on the \ALCH task from the \textsc{scone} benchmark \cite{long2016simpler}---a problem involving a complex sentence interpretation procedure that makes it challenging to apply existing data augmentation schemes.
\paragraph{Data} In \ALCH (\cref{fig:teaser}a), models must execute a sequence of human-written English instructions $\vx^{1:\text{N}}_{\text{ins}}$, on an initial state $\vx^0_{\textrm{state}}$ consisting of beakers of colored liquids (textually represented as sequence of symbols ``1: \GREENC\GREENC, 2: ...''), to predict the final state $\vx^N_{\textrm{state}}$. Initial and final states are encoded as sequences of color tokens. Predicting final states requires both grounding colors in state variables (brown $\to$ \BROWNC, red $\to$ \GREENC) and modeling what happens when colors are combined
(e.g.\ mixing \GREENC and \REDC yields \BROWNC).
\paragraph{\ourmethod} We manually construct a lexicon to showcase how to inject prior knowledge into \ourmethod. 
We encode word meaning in two relations:
a semantic equivalence relation between color words and colors:
\begin{equation*}
r_\equivfn(c_1, c_2) =  \begin{cases}
    1 \quad c_1=\text{brown}, & c_2=\BROWNC \\
    1 \quad c_1=\text{red}, & c_2=\REDC \\
    1 \quad c_1=\text{green}, & c_2=\GREENC \\
    \vdots \\
    0 \quad \text{otherwise}
\end{cases}
\end{equation*}
and a ternary relation that encodes the result of mixing colors:\footnote{In \ALCH, mixing non-identical colors produces \BROWNC.} 
\begin{equation*}
r_{\texttt{mix}}(c_1, c_2, c_3) = \begin{cases}
    1 \quad c_1 = c_2 = c_3 \\
    1 \quad c_1 \neq c_2 \land c_3 = \BROWNC \\
    0 \quad \textrm{otherwise}
\end{cases}
\end{equation*}

\noindent
Together, $(r_\equivfn, r_\texttt{mix}, \{r_{\typefn_k}\})$, where $\{r_{\typefn_k}\}$ 
assigns different types to color words, colors, and remaining tokens.
The homomorphic transformations of this lexicon exchange color words and colors but preserve mixing relations. %

\paragraph{Models and Training}
We train an LSTM \cite{hochreiter1997long} and fine-tune a T5 transformer \cite{raffel2019exploring} on the sequence-to-sequence prediction problem ($\vx^{1:\text{N}}_{\textrm{ins}}, \vx^0_{\textrm{state}}$) $\to$ $\vx^N_{\textrm{state}}$
Training details may be found in \cref{app:impdetails}.  We compare these baseline models to their \ourmethod-augmented versions 
as well as the existing compositional data augmentation scheme of \citet{liu2021counterfactual}.

\paragraph{Results}
See \cref{tab:generalization_comparison}.
LSTM+\ourmethod improves substantially over an LSTM. 
Preserving the homomorphism condition in \cref{eq:lhom} is extremely important: the procedure of \citet{liu2021counterfactual}, which naively substitutes aligned color pairs, actually \emph{hurts} performance.
Pre-trained models achieve strong initial results; combining pre-training with \ourmethod gives additional improvements.

\subsection{Learned lexicons}\label{subsec:learnedeq}

We next show that for more conventional sequence-to-sequence problems, we may apply \ourmethod with automatically derived lexicons.

\paragraph{Data} We study two standard compositional generalization benchmarks: the \SCAN \cite{lake2018generalization} instruction following  and \COGS \citep[\cref{fig:teaser}b]{kim2020cogs} semantic parsing datasets. 
\SCAN consists of simple instruction following tasks in which strings are translated into sequences of actions.
We focus on the \emph{jump} split, which measures models' ability to compose words that only appeared in isolation during training, and the \emph{around right} split, which measures generalization to novel collocations.
The \COGS dataset tests compositional generalization in semantic parsing. The dataset includes English (sentence, logical form) pairs, with systematic differences between train and test set sentence structure. We include a variant containing nonce words 
\cite{kim2022uncontrolled} to disentangle general compositional skills from lexical knowledge acquired during pre-training. See \cref{app:data} for dataset statistics.

\paragraph{\ourmethod}
We use automatic lexicon extraction to find semantic correspondence relations ($r_\equivfn$) and types ($\{r_{\typefn_k}\}$) as described in \cref{app:lexlearn}. 
Next, we apply swap-based augmentation (\cref{eq:fswap}).

\paragraph{Models} We use the same models as  \cref{subsec:complex}, along with a strong semi-structured model, LeAR \cite{liu2021learning} tailored for \COGS, and another substitution based augmentation \cite{geca} tailored for SCAN. Following \citet{akyurek2021lexicon}, we equip the LSTM for \COGS with a copy mechanism as it achieves significantly better results than \citet{kim2020cogs}'s baseline.

\paragraph{Results}
On \SCAN, \ourmethod obtains near-perfect accuracy in both \emph{jump} and \emph{around right} splits.
On the original \COGS  datasets, \ourmethod substantially outperforms the LSTM model and GECA augmentation,
and is comparable to a neural sequence model specialized for lexical generalization (LexLSTM).
Stronger results can be achieved with models specifically tailored toward semantic parsing tasks (LeAR).
 In both tasks, \ourmethod also improves upon large-scale pre-training.

\subsection{Multi-modal data}\label{subsec:multimodal}

Finally, we combine learned lexicons with non-sequential data to advance the state of the art on a long-standing visual question answering challenge.

\paragraph{Data} The \CLEVR dataset \citep[\cref{fig:teaser}c]{johnson2017clevr} contains English-language questions about generated 3D scenes containing multiple objects. Questions involve complex computational operations including quantification, comparison, and spatial reasoning. \CLEVR has been a popular testbed for evaluating composition in visual question answering models. Our main experiment uses the \CoGenT split of the dataset, which focuses on compositional generalization.
In the \CLEVR-\CoGenT training set (Split A), which contains roughly $700K$ (question, image, answer) triples, all cubes are gray, blue, brown or yellow, while all cylinders are red, green, purple or cyan. In the test set (validation set of Split B), these are reversed.
\paragraph{\ourmethod}
In VQA and other multi-modal tasks, part of the input is continuous (e.g.\ images and videos). Recent work has shown that it is possible to \emph{learn} high-quality discrete representations of continuous input data. For example, in the VQ-VAE model of \citet{oord2017neural}, a continuous image is transformed into a grid of categorical codes, with individual codes representing color, and in some cases materials and illumination (examples in \cref{tab:samplevqacogs}). We use this discretization procedure for our experiments (see \cref{app:vqvae} for details). 
We use the same algorithm as previous section to extract lexical relations.

\paragraph{Models}
Most prior work on visual question answering has used pre-trained convolutional networks to encode images, and recurrent networks to encode questions and generate answers. For experiments on \CLEVR, we use a simplified model in which both questions and images are mapped to answers by a transformer model, similarly to \citet{ramesh2021zero}. See  \cref{app:vqatransformer} for details.

Both \ourmethod augmentation and this VQATransformer model operate over sequences of discrete visual codes produced by a vector-quantized variational autoencoder. Once these discrete representations have been produced, we infer lexicons and perform data augmentation directly to these representations, without re-synthesizing images (though such synthesis is possible, as in \cref{tab:samplevqacogs}, to interpret model behavior).

The \CoGenT task is very different from the sequence modeling tasks discussed above: inputs contain many tokens, and the training set is orders of magnitude larger. GECA and CSL-Aug, which have a high polynomial dependence on sequence length, could not be applied as they fail to terminate within a reasonable amount of time.

\paragraph{Results}
In \cref{tab:clevr_cogent}, a transformer model with \ourmethod achieves state-of-the-art results on the \CLEVR-\CoGenT dataset, 
reducing errors by roughly 33\%
relative to the best existing system. 
\ourmethod also outperforms substitution 
 based-data augmentation \citep{liu2021counterfactual}, particularly on semantically complex utterances involving quantification (App.\ \cref{tab:clevr_cogent_detailed}). On the IID \CLEVR split, \ourmethod's performance is comparable to humans, and somewhat behind pre-trained models.

\section{Other Related Work}
\label{sec:otherrelated}

\paragraph{Lexicalized neural models}
Word-level alignments between input and output sequences were an essential feature of statistical phrase- and tree-based sequence models \cite{chiang2005hiero, Koehn2003StatisticalPT}. 
Neural scoring functions were sometimes integrated into these models \cite{misra2016neural}.
Neural models with attention \cite{bahdanau2014neural} do not require explicit alignment, though several pieces of past work have shown that incorporating explicit token-level correspondences improves generalization \cite{akyurek2021lexicon, Prabhu2020MakingAP, pham2018towards}. 
The semantic correspondence function in \cref{sec:method} plays the same role as the input--output dictionary in these methods, but \ourmethod as a whole is more general: it is not restricted to modeling sequence-to-sequence problems, and can infer and exploit correspondence relations between component of an example.
To the best of our knowledge, this paper is also the first to make use of token-level alignments in joint neural models of text and images. 

\paragraph{Compositionality in representation learning}
While we have focused on compositionality as a property of data distributions or interpretation functions, another line of work in machine learning and language evolution has studied compositionality as an emergent property of learned representations \cite{andreas2019measuring, resnick2019capacity, brighton2006understanding}. In settings where representational compositionality is desirable (e.g.\ to train communication protocols that can generalize to new states), \ourmethod might provide a tool for promoting it.

\paragraph{Equivariant Sequence Models}

As mentioned in \cref{sec:background}, our work builds on existing approaches that control generalization with specialized model architectures designed to be equivariant to permutations of a pre-specified lexicon (if $f(x_1 \cdots x_n) = y_1 \cdots y_m$ then $f(\pi(x_1) \cdots \pi(x_n)) = \pi(y_1) \cdots \pi(y_m)$ for a permutation $\pi$) \citep{Gordon2020Permutation,white-cotterell-2022-equivariant}.
\ourmethod differs from these approaches in three ways. First, \ourmethod is model-agnostic and compatible with pre-training. Second, \ourmethod is compatible with (and automatically derives transformations for) more complicated relations than input--output correspondences, making it possible to apply to tasks like \ALCH where such relations are important. Finally, \ourmethod gracefully handles (possibly noisy) learned lexicons, making it applicable to tasks like \CoGenT with complex or uninterpretable token mappings.

\paragraph{Data Augmentation}
Data augmentation approaches are widely used across machine learning application domains featuring known invariances of the data distribution \cite{japkowicz2000learning, data-recombination-copy, shaw2020compositional}.
Substitution-based schemes that replace words with synonyms, or synchronously replace words and their translations, are widely 
used for machine translation and general de-biasing \cite{liu2021counterfactual, wang2018switchout, wei2019eda}. %

\section{Limitations and Future Directions}
\label{sec:limitations}
While \cref{sec:theory} characterizes the effect of general $\lexicon$-homomorphisms, \ourmethod specifically produces
single-token swaps. 
In images represented as discrete symbol sequences, if a single symbol simultaneously encodes multiple visual features (e.g.\ color and texture), these features will remain entangled in synthesized examples. 
It will not exchange substructures larger than a single token, and thus will not synthesize examples longer than those already present in the training set \citep{lake2019human}.
This is because \ourmethod targets compositionality but not \emph{recursion}, which is also required to model the full range of human-like generalizations in sequence learning problems.

\ourmethod is also sensitive to the nature of the tokenization scheme itself. In morphologically rich languages, for example, \ourmethod may need to be applied not on top of words or segments, but instead canonicalized morphemes produced by learned morphological analyzers \citep{narasimhan2015unsupervised, bergmanis2017segmentation, cotterell2018joint} (analogous to the use of learned image patch representations rather than pixels in our VQA experiments).

Finally, \ourmethod does not induce some of the generalizations obtained other methods for improving compositional generalization, especially those that exploit extra structure (e.g.\ tree-shaped inputs and outputs) in the semantic parsing domain \citep[e.g.][]{liu2021learning}.
It might serve as a platform for future versions of those methods that offer greater generality and formal guarantees.

\section{Conclusion}
We have presented \ourmethod, a new data augmentation method that improves compositional generalization of neural models in multiple domains. \ourmethod is derived from a characterization of the principle of compositionality as a constraint on the symmetries of data distributions, and a procedure for automatically identifying these symmetries using token-level alignments.
Our results highlight the fact that many inductive biases targeted by specialized models in NLP can be alternatively, and often more flexibly, expressed as a hypothesis about the structure of the distribution to be modeled.

 \section*{Acknowledgements}

This work was supported by the MachineLearningApplications initiative at MIT CSAIL, the MIT--IBM Watson AI lab, and the National Science Foundation under grant CCF-2217064.
Computing resources were provided by a gift from NVIDIA through the NVAIL program and by the Lincoln Laboratory Supercloud.

\section*{Ethics Statement}
We do not anticipate any ethical issues associated with the techniques decribed in this paper.

\bibliography{anthology,acl2023.rebibed}
\bibliographystyle{acl_natbib}
\newpage
\appendix \label{sec:appendix}
\section{Proof of \cref{prop:fswap}}
\label{app:fswap}
\begin{proof}
The lexicons that we learn only unary type relations and a semantic correspondence relation 
$\lexicon = (\{r_{{\typefn_k}}\}, r_{\equivfn})$. As noted there, we make the following additional assumptions (satisfied by our lexicon learning algorithms):
\begin{enumerate}[label=(\roman*)]
    \item \emph{Types are disjoint}, i.e. every symbol belongs to a single type: $\forall_x \in \Sigma, |\typefn_{x}| = |\{r_{\typefn_k} \mid r_{\typefn_k}(x) =1\}| = 1 $.
    \item \emph{Semantic correspondences are one-to-many from \textit{text} to \textit{meaning}}. This means that no two text symbols can translate into the same meaning symbol: $E_i \cap E_j = \mathbb{1}_{x_i = x_j}$ and all $r_{\equivfn}(x \notin \inp_\text{text}, y) = r_{\equivfn}(y, x \notin \inp_\text{meaning}) = 0$. %
    \item \emph{Semantic correspondence is type preserving}: all symbols in a correspondence class have the same type  $\typefn_{e_i \in E_i} = \{r_{\typefn_{E_i}}\}$.
\end{enumerate}

To show that $f$ is an $\lexicon$-homomorphism, we want to show that $r_\epsilon(f(x_1), f(x_2))=r_\epsilon(x_1, x_2)$ for any $x_1, x_2$. The transformation function and all the definitions are symmetric to indices $i$ and $j$ ($i-j$ symmetry), so it is sufficient to show the correspondence relations stay the same for below cases only:
\begin{enumerate}[label=(\alph*)]
\item $x_1=x_{i}, x_2=x_{i}$: \begin{align*}
r_{\equivfn}(f(x_i), f(x_i)) = r_{\equivfn}(x_j, x_j) = 0 = r_{\equivfn}(x_i, x_i) 
\end{align*}
\textit{(by ii)}
\item $x_1=x_{i}, x_2=x_{j}$: \begin{align*}
r_{\equivfn}(f(x_i), f(x_j)) = r_{\equivfn}(x_j, x_i) = 0 =r_{\equivfn}(x_i, x_j) 
\end{align*}
\textit{(by ii)}
\item $x_1=x_{i}, x_2 \in E_i$:
\begin{align*}
r_{\equivfn}(f(x_i), f(x_2)) &= r_{\equivfn}(x_j, x' \in  E_j) \\&= 1 = r_{\equivfn}(x_i, x_2)  
\end{align*}
\textit{(by definition of $E_i$ and $E_j$)}
\item $x_1=x_{i}, x_2 \in E_j$:
\begin{align*}
r_{\equivfn}(f(x_i), f(x_2)) &= r_{\equivfn}(x_j, x' \in  E_i) \\&= \mathbb{1}_{x_i=x_j} = r_{\equivfn}(x_i, x_2)
\end{align*}
\textit{(by ii)}
\item $x_1=x_{i}, x_2 \notin \{\{x_i\} \cup \{x_j\} \cup E_i, E_j \}$:
\begin{align*}
r_{\equivfn}(f(x_i), f(x_2)) &= r_{\equivfn}(x_j, x_2) \\&= 0 = r_{\equivfn}(x_i, x_2)  
\end{align*}
\item $x_1=x_{i}, x_2 \notin \{\{x_i\} \cup \{x_j\} \cup E_i, E_j \}$: \\[.5em]
\textit{same steps as (e)}
\item $x_1 \in E_i, x_2=x_i$:
\begin{align*}
r_{\equivfn}(f(x_1), f(x_i)) &= r_{\equivfn}(x' \in E_j, x_j) \\&= 0 = r_{\equivfn}(x_1, x_i)  
\end{align*}
\textit{(by ii)}
\item $x_1 \in E_i, x_2=x_j$: \textit{same steps as (g)}
\item $x_1 \in E_i, x_2 \in \{\{x_i\} \cup \{x_j\} \cup E_i, E_j \}$:
\begin{align*}
r_{\equivfn}(f(x_1), f(x_2)) &= r_{\equivfn}(x' \in E_j, x_2) \\&=  0 = r_{\equivfn}(x_1, x_2)
\end{align*}
\textit{(by ii)}
\end{enumerate}

Finally, we require $r_{\typefn}(x) = r_{\typefn}(f(x))$ for any $x$ and $\typefn$. Since we assume all items in $E_i$ belong to a type matching $x_i$ (likewise for $j$), and types are disjoint, this follows immediately from the definition of $f$, which only swaps symbols of the same type.
\end{proof}

\section{Enumerating $\lexicon$-homomorphisms}
\label{app:alg}

A simple algorithm is given below:
\begin{algorithm}
\caption{$\lexicon$-homomorphism enumeration}
\label{alg:lhom}
\begin{algorithmic}
\footnotesize
\State{\textbf{input}: Lexicon $\lexicon = (\vocab, r_1, \ldots, r_n)$}
\For{$f \in \vocab^{\vocab}$}
    \State $\textit{h} \gets 1$
    \For {$i = 1..n, \inpi_a..\inpi_b \in \vocab^p$}
        \If{$r(\inpi_a, \ldots, \inpi_b) \neq r(f(\inpi_a), \ldots, f(\inpi_b))$}
        \State $\textit{h} \gets 0$
        \EndIf
    \EndFor
    \If{$h$}
        \State{\textbf{yield} $f$}
    \EndIf
\EndFor
\end{algorithmic}
\end{algorithm}

\section{Implementation Details}\label{app:impdetails}
\begin{figure*}[t]
    \centering
    \includegraphics[width=\textwidth]{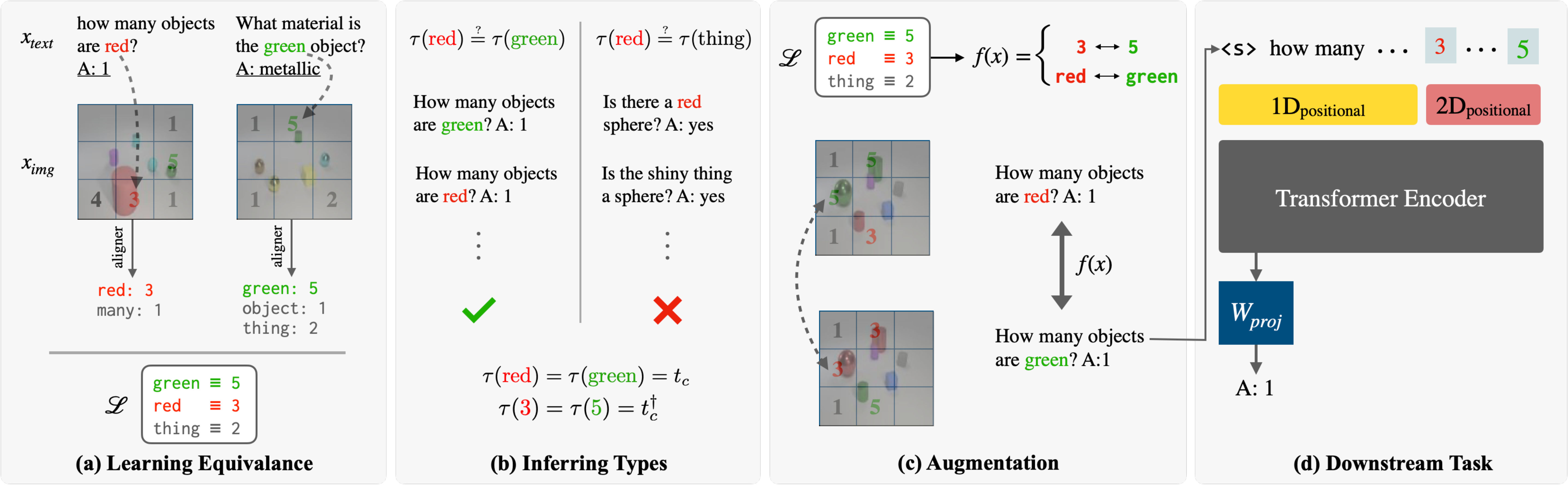}
    \caption{Overview of our approach in VQA. We discretize images using a VQVAE \cite{oord2017neural} learned from the training data. This discretization represents every image as a sequence of categorical codes. (a) We run a statistical aligner on ($\mathbf{x}_{\textrm{text}}$, $\mathbf{x}_{\textrm{img}}$) pairs to find word--visual token alignments within individual examples, then use these alignments to construct a global lexicon. (b) Each entry in the lexicon is assigned a type based on the context in which it occurs. (c) Next, we find \emph{homomorphisms} of this lexicon, and use these as data augmentation functions to generate new training examples. (d) Finally, we train a neural sequence model on the augmented dataset.}
    \label{fig:method}
\end{figure*}
\subsection{VQVAE Details}\label{app:vqvae}
We use a discrete variational auto-encoder \cite{oord2017neural} to encode the images $16 \times 16$ grids of discrete codes. We used a code-book with $n=32$ tokens associated with $d=64$ dimensional learned latent vectors. The original image size $(480,320)$ is cropped to $(440,300)$ and resize our images into $(128, 128)$ pixels. The encoder convolutional neural network has three down-sampling layers which output $16 \times 16 \times d$ size hidden representations. For encoder and decoder CNN architectures, we follow the implementation provided in a public Pytorch implementation\footnote{\url{https://github.com/ritheshkumar95/pytorch-vqvae}} by adding one more up-sampling and down-sampling layer to adjust our image size.

We use exponential moving average to update latent vectors as in official implementation\footnote{\url{https://github.com/deepmind/sonnet/blob/v2/sonnet/src/nets/vqvae.py}}
We train the model on the images of the same training data and did not use any external data.

We use batch size of $512$, and learning rate $0.0003$ with the Adam optimizer \cite{kingma2014adam}. We clip the gradients to $5.0$. Hyperparameters were selected by sweeping $d$ over $\{64,128\}$, image sizes over $\{128,144\}$, and $n$ over $\{24, 32, 48\}$ to maximize the the number of aligned tokens in the lexicon. For each experiments in \cref{tab:clevr_cogent}, we run VQVAE for 4 random seeds and select the codebook that gives the largest IBM model likelihood for training data. Each experiment takes 10 hours in 4 NVIDIA V100 GPUs.

\subsection{VQA Transformer Details}\label{app:vqatransformer}
The Transformer takes tokenized images $\vx_{I}$ and the question $\vx_{Q}$ and outputs answers as follows:
\begin{align}\label{eq:vqatransformer}
\begin{split}
    c_{\mathbf{x}_{I}}  &= \operatorname{VQVAE_{\textrm{enc}}}({\mathbf{x}_{I}}) \\
    e_Q &= W_Q  \mathbf{x}_Q + \operatorname{1D_{\textrm{positional}}}(\mathbf{x}_Q) \\ 
    e_{\mathbf{x}_{I}} &= W_c c_{\mathbf{x}_{I}}  + \operatorname{2D_{\textrm{positional}}}(c_{\mathbf{x}}) \\
    h &= \operatorname{Transformer}([e_Q \,e_{\mathbf{x}_I}]) \\
    \vx_{A} &= \operatorname{argmax} \textrm{softmax}(W_{\textrm{proj}} h_{\textrm{start}})
    \end{split}
\end{align}
We follow the hyper-paramters provided in \cite{popel2018training}. Transformers have 4 heads, $512$-dimensional hidden vectors (same with embedding sizes) and $10$ layers. We provide the dimensions in \cref{eq:vqatransformer}:
\begin{align}
\begin{split}
    \mathbf{x}_{I} &: 3 \times 128 \times 128\\
    c_{\mathbf{x}_{I}} &: 32 \times 16 \times 16\\
    W_c &: 512 \times 32 \\
    e_{\mathbf{x}_{I}} &: 512 \times (16 \times 16) \\
    e_Q &: 512 \times |V_{text}| \\
    W_Q &: 512 \times |\mathcal{V}_{\textrm{text}}| \\
    h &: 512 \times (|Q| + 16 \times 16) \\
    h_{\textrm{start}} &: 512 \times 1 \\
    W_{\textrm{proj}} &: 512 \times |\mathcal{V}_{\textrm{text}}| \\
    \end{split}
\end{align}
Models are trained using the Adam optimizer with and Noam learning rate scheduler \cite{vaswani2017attention} with $lr=1.0$ and $16k$ warming steps as provided in \citet{popel2018training}. We use a batch size of $1024$ and we train for $200k$ steps, which takes 48 hours on 8 NVIDIA V100 GPUs. In \cref{fig:method}, we provide the sketch of overall pipeline.

\subsection{Baselines: LSTM Details}\label{app:lstm}
We use the implementation provided by \cite{akyurek2021lexicon}, increasing the number of training iterations from $8k$ to $15k$ for augmented training runs in \COGS, \SCAN datasets. For the \ALCH dataset, we optimize iteration count over $\{8k, 15k, 25k, 50k\}$ based on validation accuracy, and found $25k$ to be optimal. For the \CLEVR dataset, we optimize itreation count over $\{8k, 15k, 25k, 50k\}$ for \CLEVR and \CLEVR-\CoGenT dataset based on \CLEVR's validation accuracy.

\subsection{Baselines: T5 Details}
We use the Huggingface \cite{wolf2019huggingface} implementation T5-base model. The difference between our T5 baselines results and the results in \citet{qiu2021improving} due to their usage of different intermediate representation for the output in order to keep our evaluation consistent with other previous work. We try to optimize (learning rate, learning rate scheduler) and training parameters (iteration count) of \citet{qiu2021improving} and \cite{akyurek2021lexicon}, use the best setting for the given dataset.

\subsection{Alignment Model Details}\label{app:ibm}
In our experiments, we use the best alignment method reported in \cite{akyurek2021lexicon}, which is IBM Model 2 for all datasets except the SCAN dataset that uses their proposed algorithm, to obtain our initial alignments $\mathcal{A} = \{(x_i, x_j)$: set of tuples contains aligned tokens. We run alignment algorithms between  $\inp_\text{text}$ and $\inp_\text{meaning}$. For \SCAN and \COGS, $\inp_\text{text}$ is the actual inputs, $\inp_\text{meaning}$ is the actual outputs. In \ALCH, $\inp_\text{text}$ is instructions, $\inp_\text{meaning}$ is beaker states. In VQA experiments, $\inp_\text{text}$ question and answer words, $\inp_\text{meaning}$ VQVAE codes. We disable \emph{diagonalization} in FastAlign as it includes non-language structured VQVAE codes. %
\section{Lexicons}\label{app:lexlearn}

\subsection{Lexicon Learning}
\paragraph{Extracting semantic correspondences $\boldsymbol{r_{\equivfn}(\inpi_i, \inpi_j)}$} 
Given the initial alignments $\mathcal{A}$ in \cref{app:ibm}, we remove every $\inpi_j$ that is not aligned to at least 1\% of occurrences of $\inpi_i$ in the dataset. We then produce a \emph{one-to-many} lexicon by deleting lexicon entries $(\inpi_i, \inpi_j)$ and $(\inpi_i', \inpi_j)$ when both exist. With, these alignment creates entries in $r_{\equivfn}(\inpi_i, \inpi_j) = \mathbb{1}_{(\inpi_i, \inpi_j) \in \mathcal{A}}$

\paragraph{Extracting Types $\boldsymbol{\mathbf{r_{\typefn}(\mathrm{\inpi})}}$} 
Given the partition of the data points ($\inp_\text{text}, \inp_\text{meaning}$), our type finding algorithm is essentially \emph{unsupervised clustering} of the text symbols in $\inp_\text{text}$. The types of matching $\inp_\text{meaning}$ symbols are automatically determined by the correspondence relation, $r_{\equivfn}$ found above. In all our datasets $\inp_\text{text}$ is English, so the symbols that goes into following clustering algorithm are actual words. 

Following  \citet{clark2007polynomial} and \citet{geca}, we assign types to individual words based on their environments. For each symbol, $\inpi \in \vocab$, that has at least one equivalent symbol in $\mathcal{A}$,  we define the context $\kappa(\inpi) = \{ (\alpha, \beta) : \alpha \inpi \beta \in \goodinpspace \}$: the set of strings $(\alpha, \beta)$ that appear surrounding $\inpi$ in the training set. (If the two examples in \cref{fig:teaser} formed the entire training set, we would have $\kappa(\textit{yellow}) = \kappa(\textit{green}) = \{ (\textit{Q: How many}, ~\textit{objects? A: 1}) \}$.). \footnote{The environment's window size $w=|\alpha|=|\beta|$ is a fixed hyper-parameter similar to \citet{geca}. We optimize it over $w \in \{1,2,5,10,15\}$ in \ALCH dataset (used $w=10$) by using the validation set, in \CLEVR and \CLEVR-\CoGenT based on \CLEVR's validation set (used $w=10$). For \COGS and \SCAN datasets, we resort $w=1$ to enable learning of extremely rare items.}
We then represent $\vocab$ as a graph with an edge between each $\inpi_i$ and $\inpi_j$ where $\kappa(\inpi_i) \cap \kappa(\inpi_j) \neq \emptyset$ (Clark and Eyraud's \emph{syntactic congruence} relation) and $\inpi_i$ and $\inpi_j$ has same part-of-speech tag according to spaCy pipeline with \texttt{en-core-web-lm} language model \footnote{\url{https://github.com/explosion/spacy-models/releases/tag/en_core_web_lg-3.5.0}}. We assign each connected component of this graph a distinct type. This is only one possible approach to typing;  alternatives might use clustering of distributed representations.
\subsection{Extracted Lexicons}
In this section, we present lexicon entries for symbols that we learned through our typing algorithm. 
\paragraph{\SCAN}
We present equivalance relations that we extracted from \SCAN training dataset.
\begin{tabular}{lll}
\toprule
     \textbf{Source symbol} &  \textbf{Type} & \textbf{Target Symbol(s)}  \\
\midrule
     jump & $t_1$ & \texttt{I\_JUMP} \\ 
     walk & $t_1$ & \texttt{I\_WALK}\\
     run & $t_1$ & \texttt{I\_RUN}\\ 
     look & $t_1$ & \texttt{I\_LOOK} \\
     left & $t_2$ & \texttt{I\_LEFT} \\
     right & $t_2$ & \texttt{I\_RIGHT} \\
\bottomrule
\end{tabular}
\paragraph{\COGS} Since the extracted lexicon is large for semantic parsing, we present only some of the equivalance relations that we extracted from \COGS training data for reference.
\newline
\newline
\begin{tabular}{lll}
\toprule
     \textbf{Source symbol} &  \textbf{Type} & \textbf{Target Symbol(s)}  \\
\midrule
     baked & $t_1$ & \texttt{bake} \\ 
     noticed & $t_1$ & \texttt{notice}\\
     helped & $t_1$ & \texttt{help}\\ 
     dog & $t_2$ & \texttt{dog} \\
     boy & $t_2$ & \texttt{boy} \\
     sailor & $t_2$ & \texttt{sailor} \\
\bottomrule
\end{tabular}
\newline
\newline
\paragraph{\CoGenT}
We present equivalance relations that we extracted \CLEVR-\CoGenT training data. The lexicon we found includes all the color symbols. The target symbols given here are learned VQVAE codes. In \cref{tab:samples}, we show these codes on top of the images to qualitatively verify the alignments.
\newline
\newline
\begin{tabular}{lll}
\toprule
     \textbf{Source Symbol} & \textbf{Type} & \textbf{Target Symbols}  \\
\midrule
     red & $t_1$ & 9 \\ 
     purple & $t_1$ &  25, 29 \\ 
     cyan & $t_1$ &  28 \\
     blue & $t_1$ &   20 \\
     green & $t_1$ &  11 \\
     yellow & $t_1$ & 23, 18 \\
     gray & $t_1$ &  6 \\
     brown & $t_1$ &  2 \\
\bottomrule
\end{tabular}
\newline
\newline

\section{Samples \& Statistics}\label{tab:samples}
\begin{table*}[t!]
\begin{subtable}[]{\textwidth}
\label{tab:cogs}
\centering
\resizebox{\linewidth}{!}{%
\begin{tabular}{l|l|l|l}
\toprule
\multicolumn{1}{c}{\textbf{Generated Sentence}} & \multicolumn{1}{c}{\textbf{Generated Logical form}} & \multicolumn{1}{c}{\textbf{Original Sentence}} & \multicolumn{1}{c}{\textbf{Original Example Logical Form}} \\
\midrule
\textit{A cake was \underline{baked} by Scarlett .} & \texttt{\begin{tabular}[c]{@{}l@{}}cake($x_{1}$) AND bake.theme($x_{3}$,  $x_{1}$) AND \\ bake.agent ($x_{3}$, Scarlett )\end{tabular}} & \textit{A cake was \underline{stabbed} by Scarlett .} & \texttt{\begin{tabular}[c]{@{}l@{}}cake($x_{1}$) AND stab.theme ($x_{3}$, $x_{1}$) AND \\ stab.agent ($x_{3}$,  Scarlett )\end{tabular}} \\ \hline
\textit{The \underline{bunny} needed to cook .} & \texttt{\begin{tabular}[c]{@{}l@{}}*bunny($x_{1}$); need.agent($x_{2}$, $x_{1}$) AND \\  need.xcomp ($x_{2}$, $x_{4}$)  AND cook.agent($x_{4}$,  $x_{1}$)\end{tabular}} & \textit{The \underline{girl} needed to cook .} & \texttt{\begin{tabular}[c|]{@{}l@{}}*girl ($x_{1}$); need.agent ($x_{2}$, $x_{1}$) AND  \\ need.xcomp($x_{2}$, $x_{4}$) AND cook.agent ($x_{4}$, $x_{1}$)\end{tabular}} \\ \hline
\textit{The \underline{bun} hunted Emma .} & \texttt{\begin{tabular}[c]{@{}l@{}}*bun($x_{1}$); hunt.agent($x_{2}$, $x_{1}$) AND \\ hunt.theme ($x_{2}$, Emma)\end{tabular}} & \textit{The \underline{teacher} hunted Emma .} & \texttt{\begin{tabular}[c]{@{}l@{}}*teacher($x_{1}$); hunt.agent($x_{2}$, $x_{1}$) AND  \\ hunt.theme($x_{2}$, Emma)\end{tabular}} \\
\bottomrule
\end{tabular}
}
\end{subtable}
 
\begin{subtable}[]{\textwidth}
\label{tab:vqa}
\centering
\resizebox{\linewidth}{!}{%
\begin{tabular}{llll}
\multicolumn{1}{c}{\textbf{Generated Text}} & \multicolumn{1}{c}{\textbf{Generated Image}} & \multicolumn{1}{c}{\textbf{Original Text}} & \multicolumn{1}{c}{\textbf{Original Image}} \\
\midrule
 \raisebox{15mm}{\textit{\begin{tabular}[l]{@{}l@{}}How many metallic objects are  \\ either tiny \underline{yellow} things or blocks? \\  A: 1\end{tabular}}}  &  \multicolumn{1}{c|}{\includegraphics[width=0.2\textwidth]{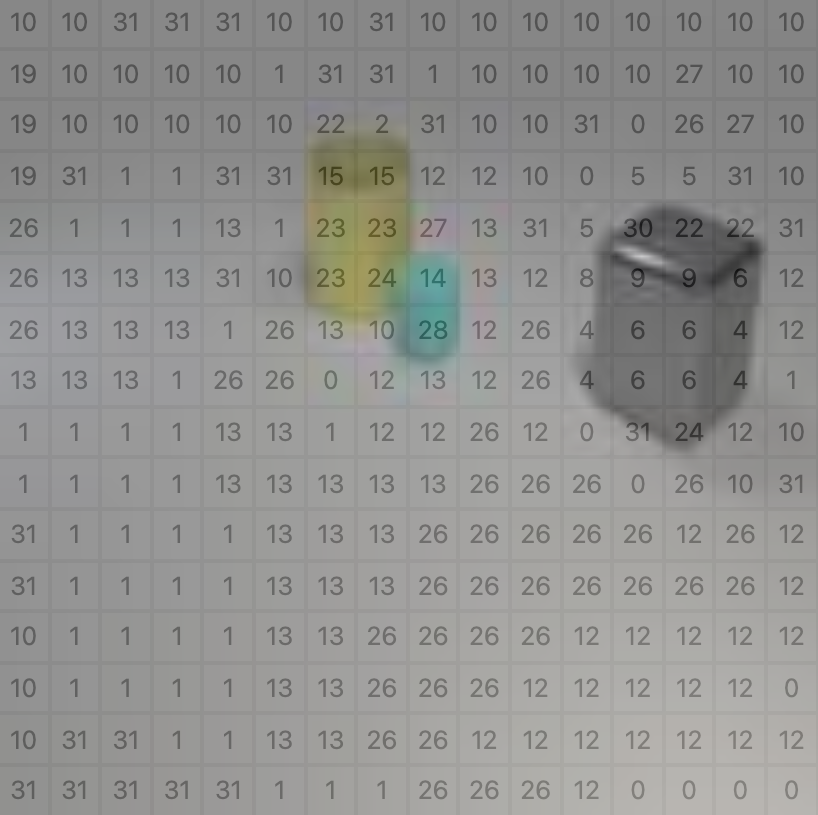}} & 
 \raisebox{15mm}{\textit{\begin{tabular}[l]{@{}l@{}}How many metallic objects are  \\ either tiny \underline{red} things or blocks? \\  A: 1\end{tabular}}}  &   \multicolumn{1}{c}{\includegraphics[width=0.2\textwidth]{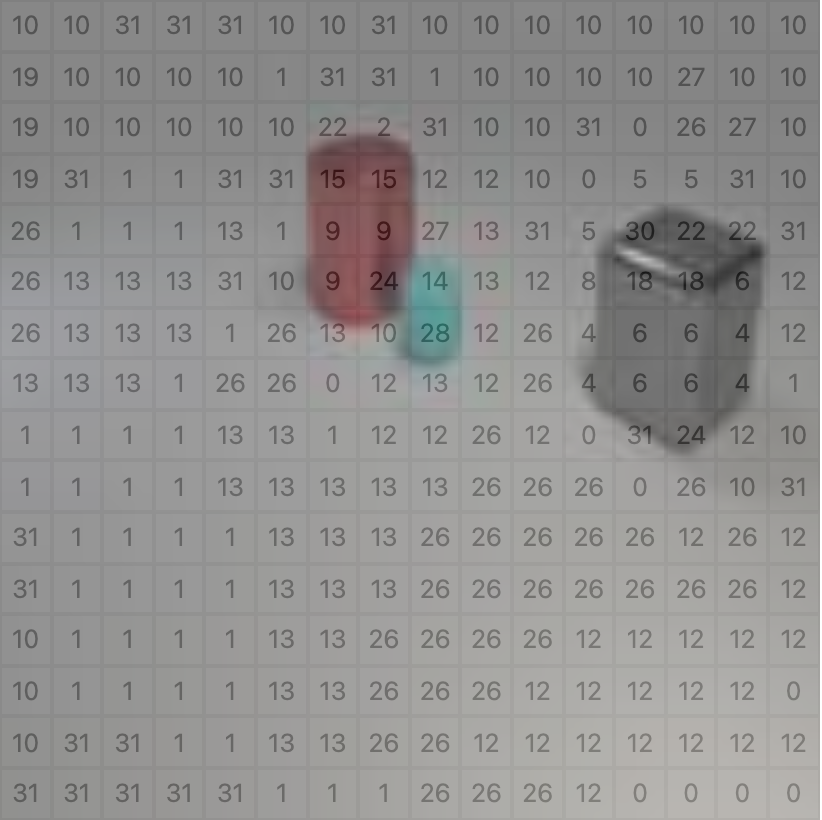}} \\
 \raisebox{15mm}{\textit{\begin{tabular}[l]{@{}l@{}}
What is the size of the other object that is \\ the same material as the big \underline{brown} thing \\ A: Large \end{tabular}}}  &  \multicolumn{1}{c|}{\includegraphics[width=0.2\textwidth]{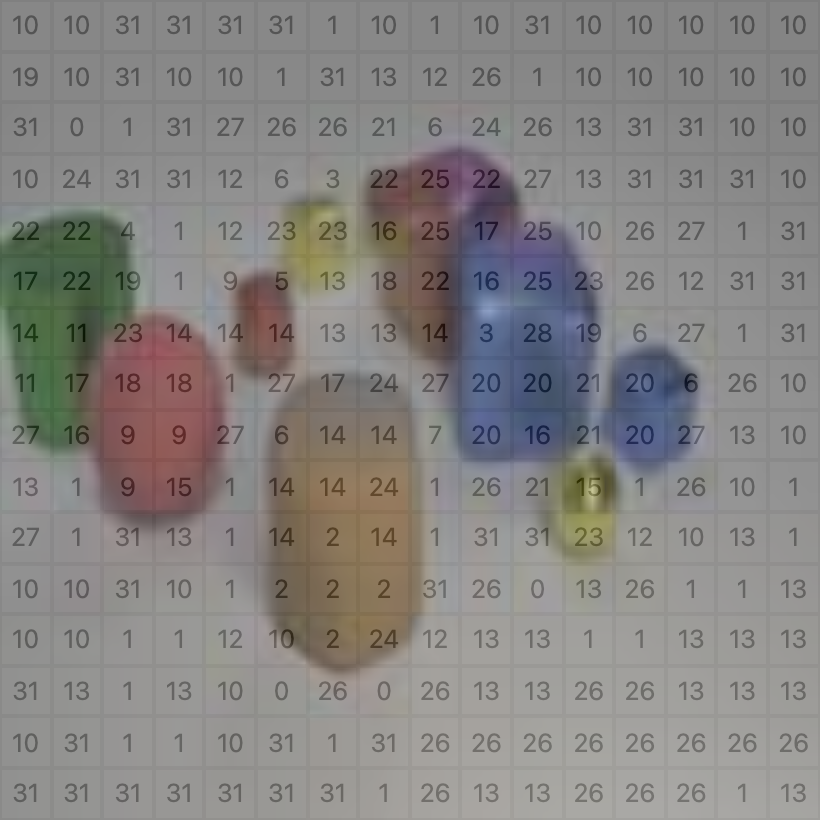}} & 
 \raisebox{15mm}{\textit{\begin{tabular}[l]{@{}l@{}}What is the size of the other object that is \\ the same material as the big \underline{purple} thing?  \\ A: Large\end{tabular}}}  &   \multicolumn{1}{c}{\includegraphics[width=0.2\textwidth]{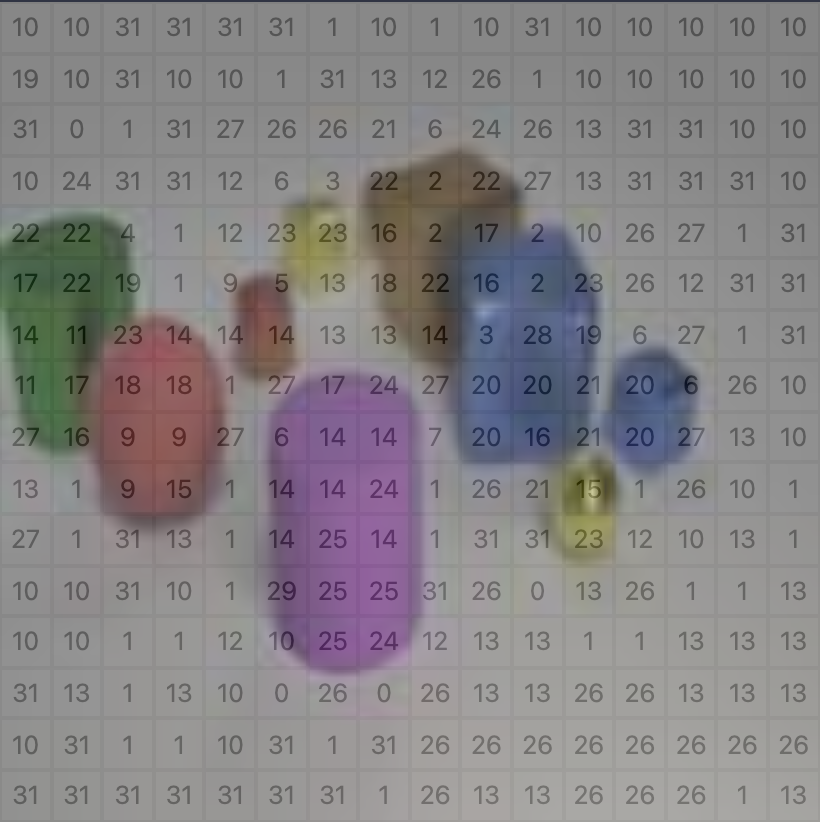}} \\

\bottomrule
\end{tabular}
}
\end{subtable}
\caption{Generated samples for \CLEVR-\CoGenT  and \COGS datasets. In  \CLEVR-\CoGenT, our method operate on displayed VQVAE symbols on top of the images and we can decode it to actual images as displayed here. The generated yellow cylinder in the first row is an unseen color+shape combination.
}
\label{tab:samplevqacogs}
\end{table*}
We present examples generated by \ourmethod in \cref{tab:samplevqacogs}. As we performed augmentation random and online during training, and we do not have a static augmented set to calculate statistics for. Instead, we run a single iteration of our augmentation function over all examples with our augmentation function and obtain following statistics:
\newline
\newline
\resizebox{\linewidth}{!}{%
\begin{tabular}{lllll}
\toprule
    Augmentation Statistics & COGS & \CLEVR & SCAN & ALCHEMY \\
\midrule
    \# Augmented samples & 24155 &  699960 & 14670 & 18285  \\
    \# Novel samples  &23301 & 548277 &   7304 &  11786  \\ 
    \# Unique novel samples & 22617 &  548277 & 4851 & 11786 \\
    \# Samples in test & 121 & 0 & 7304 & 0  \\
    \# Unique samples in test & 109 & 0 &  4851 & 0  \\
\bottomrule
\end{tabular}
}
\newline
\newline
\newline
\newline
Note that, in \CLEVR, we consider the novelty based on (question + answer) string since the generated image codes can be novel but the resulting image not. The following differences are significant under a paired t-test:
\subsection{Statistical Significance Tests for  \cref{tab:generalization_comparison}}
The following differences in \cref{tab:generalization_comparison} are significant under a paired t-test:
\paragraph{Alchemy:}
\begin{itemize}
\item T5+\ourmethod > T5 (p < 0.05)
\item LSTM+\ourmethod > {LSTM+Substitute, LSTM, LexLSTM} (p < .00001)
\end{itemize}

\paragraph{COGS:}
\begin{itemize}
\item T5+\ourmethod > T5 (p < .00001)
\item LSTM+\ourmethod  > {LSTM, \GC} (p < .00001)
\end{itemize}

\section{\CLEVR-\CoGenT Detailed Results}
\CoGenT results are presented in \cref{tab:clevr_cogent_detailed}.
\vspace{1em}
\begin{table*}[t]
\caption{Breakdown of \CLEVR-\CoGenT Results}
\label{tab:clevr_cogent_detailed}
\centering
\resizebox{\linewidth}{!}{%
\begin{tabular}{llllllll}
\toprule
  & \multicolumn{6}{c}{\bf \CLEVR-\CoGenT}  \\ 

\multicolumn{1}{l}{\textbf{VQATransformer (No Pre-Praining)}} &&&&&&\\

\boxSpace Baseline   & 73.3 \stderr{1.0}  &   71.0 \stderr{1.6} & 85.7 \stderr{0.74} & 83.5 \stderr{0.1}  & 64.4 \stderr{0.7} & 81.4 \stderr{1.2}  \\
\boxSpace + Substitute \cite[e.g.][]{liu2021counterfactual} & 84.4 \stderr{0.7}  &76.7 \stderr{1.1} & 89.5 \stderr{0.3} & 88.8 \stderr{0.3} & \bf 85.1 \stderr{1.0} & 88.0 \stderr{0.6}   \\
\boxSpace+ LexSym  & \bf 85.9 \stderr{0.9} & \bf 80.1 \stderr{0.9} & \bf 91.1 \stderr{0.5}  & \bf 91.0 \stderr{0.7} & \bf 85.2 \stderr{1.3} & \bf 88.9 \stderr{0.7}     \\
 \bottomrule
\end{tabular}
}
\end{table*}
 
\vspace{1em}
\section{Data}\label{app:data}
For \CLEVR-\CoGenT \cite{johnson2017clevr}, we use training set for Split-A as our training set, validation set for Split-B as our validation set, and validation set of Split-B as our test set. The \CLEVR and \ALCH datasets is released under the Creative Commons CC BY 4.0 license. The \COGS datasets \cite{kim2020cogs,kim2022uncontrolled} are released under MIT license. \SCAN \cite{lake2018generalization} datasets are released under BSD license. The train, validation and test set sizes are given as below.

\noindent \begin{tabular}{llll}
\toprule
    Dataset & Train  &  Validation & Test \\
\midrule
    \ALCH & 18285 & 1225 & 4495\\
    \SCAN & \\
     \sboxSpace \textit{(jump)} & 14670 & -- & 7706\\ 
    \sboxSpace \textit{(around right)} &15225 & -- &4476\\ 
    \COGS \\
      \sboxSpace \textit{(original)}&24155 & 3000 & 21000\\
      \sboxSpace \textit{(nonce)} &24155 & 3000 & 21000\\
    \CLEVR \\
      \sboxSpace \textit{(original)}& 699989 & 149991 & \\
      \sboxSpace  \textit{(CoGenT)} & 699960 & -- &150000\\
\bottomrule
\end{tabular}

\end{document}